\definecolor{textblue}{rgb}{.2,.2,.7}
\definecolor{textred}{rgb}{0.54,0,0}
\definecolor{textgreen}{rgb}{0,0.43,0}
\theoremstyle{plain}
\newtheorem{theorem}{Theorem}[section]
\newtheorem{lemma}[theorem]{Lemma}
\newtheorem{corollary}[theorem]{Corollary}
\theoremstyle{definition}
\newtheorem{definition}[theorem]{Definition}
\theoremstyle{remark}
\definecolor{darkgreen}{RGB}{0,128,0}
\icmltitlerunning{Fast Inference from Transformers via Speculative Decoding}
\begin{document}

\twocolumn[
\icmltitle{Fast Inference from Transformers via Speculative Decoding}

% It is OKAY to include author information, even for blind
% submissions: the style file will automatically remove it for you
% unless you've provided the [accepted] option to the icml2023
% package.

% List of affiliations: The first argument should be a (short)
% identifier you will use later to specify author affiliations
% Academic affiliations should list Department, University, City, Region, Country
% Industry affiliations should list Company, City, Region, Country

% You can specify symbols, otherwise they are numbered in order.
% Ideally, you should not use this facility. Affiliations will be numbered
% in order of appearance and this is the preferred way.
\icmlsetsymbol{equal}{*}

\begin{icmlauthorlist}
\icmlauthor{Yaniv Leviathan}{equal,google}
\icmlauthor{Matan Kalman}{equal,google}
\icmlauthor{Yossi Matias}{google}
\end{icmlauthorlist}

\icmlaffiliation{google}{Google Research, Mountain View, CA, USA}

\icmlcorrespondingauthor{Yaniv Leviathan}{leviathan@google.com}

% You may provide any keywords that you
% find helpful for describing your paper; these are used to populate
% the "keywords" metadata in the PDF but will not be shown in the document
\icmlkeywords{Machine Learning, Transformer, Inference, Decoding}

\vskip 0.3in
]

% this must go after the closing bracket ] following \twocolumn[ ...

% This command actually creates the footnote in the first column
% listing the affiliations and the copyright notice.
% The command takes one argument, which is text to display at the start of the footnote.
% The \icmlEqualContribution command is standard text for equal contribution.
% Remove it (just {}) if you do not need this facility.

% \printAffiliationsAndNotice{}  % leave blank if no need to mention equal contribution
\printAffiliationsAndNotice{\icmlEqualContribution} % otherwise use the standard text.

\begin{abstract}
Inference from large autoregressive models like Transformers is slow - decoding $K$ tokens takes $K$ serial runs of the model.
In this work we introduce \emph{speculative decoding} - an algorithm to sample from autoregressive models faster \emph{without any changes to the outputs},
by computing several tokens in parallel.
At the heart of our approach lie the observations that (1) hard language-modeling tasks often include easier subtasks that can be approximated well by more efficient
models, and (2) using speculative execution and a novel sampling method, we can make exact decoding from the large models faster, by running them in parallel on the outputs of the approximation models, potentially generating several tokens concurrently, and without changing the distribution.
Our method can accelerate existing off-the-shelf models without retraining or architecture changes.
We demonstrate it on T5-XXL and show a \textbf{2X-3X} acceleration compared to the standard T5X implementation, with identical outputs.
\end{abstract}

\begin{figure*}
\includegraphics[width=1\linewidth]{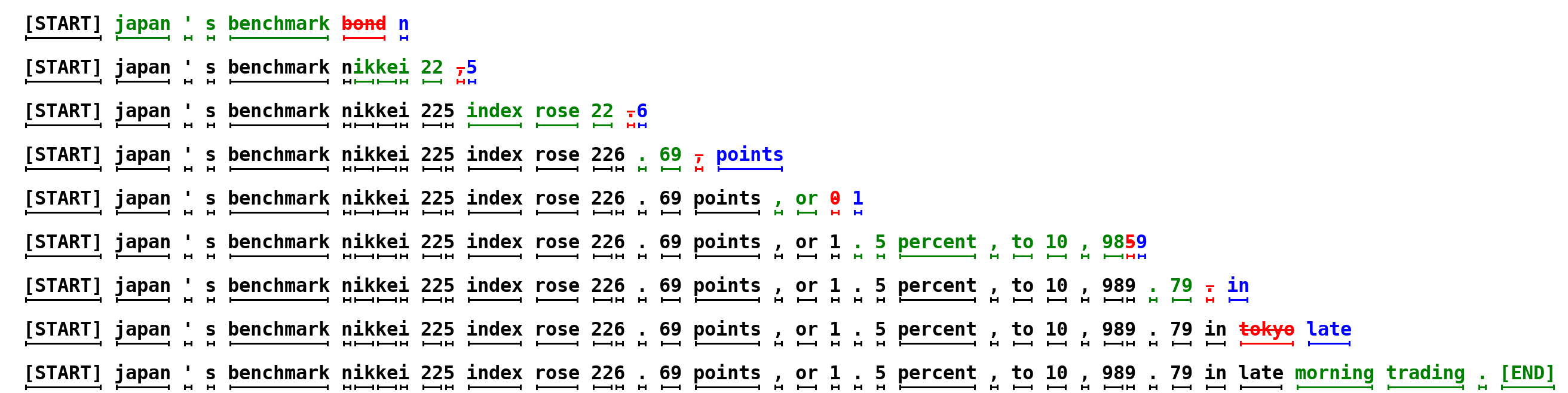}
\caption[test]{Our technique illustrated in the case of unconditional language modeling. Each line represents one iteration of the algorithm. The {\fontfamily{qcr}\selectfont {\color{darkgreen}\underline{\textbf{green}}}} tokens are the suggestions made by the approximation model (here, a GPT-like Transformer decoder with 6M parameters trained on lm1b with 8k tokens) that the target model (here, a GPT-like Transformer decoder with 97M parameters in the same setting) accepted, while the {\fontfamily{qcr}\selectfont {\color{red}\underline{\textbf{red}}}} and {\fontfamily{qcr}\selectfont {\color{blue}\underline{\textbf{blue}}}} tokens are the rejected suggestions and their corrections, respectively. For example, in the first line the target model was run only once, and 5 tokens were generated.}
\centering
\label{fig:autocomplete}
\end{figure*}

\section{Introduction}
\label{sec:introduction}

Large autoregressive models, notably large Transformers \citep{transformer}, are much more capable than smaller models, as is evidenced countless times in recent years e.g., in the text or image domains, like GPT-3 \citep{gpt3}, LaMDA \citep{lamda}, Parti \citep{parti}, and PaLM \citep{palm}. Unfortunately, a single decode step from these larger models is significantly slower than a step from their smaller counterparts, and making things worse, these steps are done serially - decoding $K$ tokens takes $K$ serial runs of the model.

Given the importance of large autoregressive models and specifically large Transformers, several approaches were developed to make inference from them faster.
Some approaches aim to reduce the inference cost for \emph{all} inputs equally
\citep[e.g.][]{Distilling_the_Knowledge_in_a_Neural_Network, Sparse_is_Enough_in_Scaling_Transformers, Quantized_Neural_Networks, Primer, multi-query-attn}.
Other approaches stem from the observation that not all inference steps are born alike - some require a very large model, while others can be approximated well by more efficient models. These \emph{adaptive computation} methods
\citep[e.g.][]{Dynamic_Neural_Networks_Survey, Adaptive_Attention_Span_in_Transformers, Confident_Adaptive_Transformers, early_exits, Controlling_Computation_versus_Quality_Sequence_Models, depth_adaptive_transformer, Matching_Model_and_Instance_Complexities}
aim to use less compute resources for easier inference steps.
While many of these solutions have proven extremely effective in practice, they usually require changing the model architecture, changing the training-procedure and re-training the models, and don't maintain identical outputs.

The key observation above, that some inference steps are ``harder'' and some are ``easier'', is also a key motivator for our work.
We additionally observe that inference from large models is often not bottlenecked on arithmetic operations, but rather on memory bandwidth and communication, so additional computation resources might be available.
Therefore we suggest increasing concurrency as a complementary approach to using an adaptive amount of computation. Specifically, we are able to accelerate inference without changing the model architectures, without changing the training-procedures or needing to re-train the models, and without changing the model output distribution. This is accomplished via \emph{speculative execution}.

Speculative execution \cite{speculative_computation, HennessyPatterson12} is an optimization technique, common in processors, where a task is performed in parallel to verifying if it's actually needed - the payoff being increased concurrency. A well-known example of speculative execution is branch prediction.
For speculative execution to be effective, we need an efficient mechanism to suggest tasks to execute that are likely to be needed.
In this work, we generalize speculative execution to the stochastic setting - where a task \emph{might be} needed with some probability.
Applying this to decoding from autoregressive models like Transformers, we sample generations from more efficient \emph{approximation models} as speculative prefixes for the slower \emph{target models}.
With a novel sampling method, \emph{speculative sampling}, we maximize the probability of these speculative tasks to be accepted, while guaranteeing that the outputs from our system have the same distribution as those from the target model alone.
For example, the sentence in \cref{fig:autocomplete}, consisting of 38 tokens, was generated by our method with only 9 serial runs of a larger target model (97M parameters) thanks to a smaller and more efficient approximation model (6M parameters),
while the probability of generating it is unchanged.

We analyze our method in a variety of tasks and model sizes: unconditional generation from a 97M parameter GPT-like model trained on lm1b, English to German translation and news article summarization with an 11B parameters T5-XXL model, and a dialog task with a 137B parameter LaMDA model. We implement our method and compare actual walltimes for T5-XXL to those of the robust T5X implementation \cite{t5x}, showing an out-of-the-box latency improvement of \textbf{2X-3X}, without any change to the outputs (\cref{sec:experiments}).

Our method is easy to employ in actual production settings, doesn't require training new models, and doesn't change the outputs.
Therefore, in common situations where memory bandwidth is the bottleneck, and  compute resources are available, it may be a good default to accelerate sampling from autoregressive models like Transformers.

To summarize, our main contributions are: (1) A generalization of speculative execution to the stochastic setting, with a novel sampling method we call \emph{speculative sampling}, and (2) A decoding mechanism we call \emph{speculative decoding} that can accelerate decoding from autoregressive models, without any change to the model architectures, training regimes and output distributions.

\section{Speculative Decoding}
\subsection{Overview}

Let $M_p$ be the target model, inference from which we're trying to accelerate, and $p(x_t|x_{<t})$ the distribution we get from the model for a prefix $x_{<t}$. Let $M_q$ be a more efficient approximation model for the same task, and denote by $q(x_t|x_{<t})$ the distribution we get from the model for a prefix $x_{<t}$\footnote{
We'll use $p(x)$ to denote $p(x_t|x_{<t})$ whenever the prefix $x_{<t}$ is clear from the context, and similarly for $q(x)$.}. The core idea is to (1) use the more efficient model $M_q$ to generate $\gamma \in \mathbb{Z}^+$ completions (see \cref{sec:optimal_gamma} for how to optimally choose this parameter), then (2) use the target model $M_p$ to evaluate all of the guesses and their respective probabilities from $M_q$ \emph{in parallel}, accepting all those that \emph{can} lead to an identical distribution, and (3) sampling an additional token from an adjusted distribution to fix the first one that was rejected, or to add an additional one if they are all accepted. That way, each parallel run of the target model $M_p$ will produce at least one new token (so the number of serial runs of the target model can never, even in the worst case, be larger than the simple autoregressive method), but it can potentially generate many new tokens, up to $\gamma + 1$, depending on how well $M_q$ approximates $M_p$.

\subsection{Standardized Sampling}
\label{sec:standardized_sampling}
First, note that while there are many methods and parameters of sampling, like argmax, top-k, nucleus, and setting a temperature, and popular implementations usually treat them differently at the logits level, they can all easily be cast into standard sampling from an adjusted probability distribution. For example, argmax sampling is equivalent to zeroing out non-max elements of the distribution and normalizing. We can therefore only deal with standard sampling from a probability distribution, and cast all of the other types of sampling into that framework. Going forward we'll assume that $p(x)$ and $q(x)$ are the distributions from $M_p$ and $M_q$ respectively, adjusted for the sampling method.

\subsection{Speculative Sampling}
\label{sec:speculative_sampling}
To sample $x \sim p(x)$, we instead sample $x \sim q(x)$, keeping it if $q(x) \le p(x)$, and in case $q(x) > p(x)$ we reject the sample with probability $1 - \frac{p(x)}{q(x)}$ and sample $x$ again from an adjusted distribution $p'(x) = norm(max(0, p(x) - q(x)))$ instead. It’s easy to show (see \cref{appendix:correctness}) that for any distributions $p(x)$ and $q(x)$, and $x$ sampled in this way, indeed $x \sim p(x)$.

Given the distribution $q(x)$ obtained from running $M_q$ on a conditioning $prefix$, we can sample a token $x_1 \sim q(x)$. We then calculate the distribution $p(x)$ by running $M_p$ on $prefix$ while in parallel speculatively calculating the distribution of the next token $x_2$ by running $M_p$ on $prefix + [x_1]$. Once both computations complete, we proceed as per above: If $x_1$ is rejected, we discard the computation of $x_2$ and re-sample $x_1$ from an adjusted distribution, and if $x_1$ is accepted, we keep both tokens. \cref{alg:sample_with_approx} generalizes this idea to sample between 1 and $\gamma + 1$ tokens at once.

\newcommand{\COMMENTLLAMA}[1]{{\color{darkgreen} $\triangleright$ {#1}}}

\begin{algorithm}
  \caption{SpeculativeDecodingStep}
  \label{alg:sample_with_approx}
\begin{algorithmic}
  \STATE {\bfseries Inputs:} $M_p, M_q, prefix$.
  \STATE \COMMENTLLAMA{Sample $\gamma$ guesses $x_{1,\ldots,\gamma}$ from $M_q$ autoregressively.}
  \FOR{$i=1$ {\bfseries to} $\gamma$}
    \STATE $q_i(x) \gets M_q(prefix + [x_1, \ldots, x_{i-1}])$
    \STATE $x_i \sim q_i(x)$
  \ENDFOR
  \STATE \COMMENTLLAMA{Run $M_p$ in parallel.}
  \STATE $p_1(x), \ldots, p_{\gamma + 1}(x) \gets$
  \STATE \quad \quad $M_p(prefix), \ldots, M_p(prefix + [x_1, \ldots, x_{\gamma}])$
  \STATE \COMMENTLLAMA{Determine the number of accepted guesses $n$.}
  \STATE $r_1 \sim U(0, 1), \dots, r_\gamma \sim U(0, 1)$
  \STATE $n \gets \min(\{ i - 1 \mid 1 \le i \le \gamma, r_i > \frac{p_i(x)}{q_i(x)} \} \cup \{ \gamma \})$
  \STATE \COMMENTLLAMA{Adjust the distribution from $M_p$ if needed.}
  \STATE $p'(x) \gets p_{n+1}(x)$
  \IF{$n < \gamma$}
    \STATE $p'(x) \gets norm(max(0, p_{n+1}(x) - q_{n+1}(x)))$
  \ENDIF
  \STATE \COMMENTLLAMA{Return one token from $M_p$, and $n$ tokens from $M_q$.}
  \STATE $t \sim p'(x)$
  \STATE {\bfseries return} $prefix + [x_1, \ldots, x_{n}, t]$
\end{algorithmic}
\end{algorithm}

\section{Analysis}

\subsection{Number of Generated Tokens}
\label{sec:num_generated_tokens}
Let's analyze the reduction factor in the number of serial calls to the target model, or equivalently, the expected number of tokens produced by a single run of \cref{alg:sample_with_approx}.

\begin{definition}
\label{def:beta_prefix}
The \emph{acceptance rate $\beta_{x_{<t}}$}, given a prefix $x_{<t}$, is the probability of accepting $x_t \sim q(x_t|x_{<t})$ by speculative sampling, as per \cref{sec:speculative_sampling}\footnote{As before, we'll omit the $x_{<t}$ subscript whenever the prefix is clear from the context.}.
\end{definition}

$E(\beta)$ is then a natural measure of how well $M_q$ approximates $M_p$. If we make the simplifying assumption that the $\beta$s are i.i.d., and denote $\alpha = E(\beta)$, then the number of tokens produced by a single run of \cref{alg:sample_with_approx} is a capped geometric variable, with success probability $1 - \alpha$ and cap $\gamma + 1$, and the expected number of tokens generated by \cref{alg:sample_with_approx} satisfies \cref{eq:expected_num_tokens}. See \cref{fig:alpha_v_tokens}.

\begin{equation}
\label{eq:expected_num_tokens}
E(\#\ generated\ tokens) = \frac{1 - \alpha^{\gamma + 1}}{1 - \alpha}
\end{equation}

\begin{figure}[ht]
\begin{center}
\centerline{\includegraphics[width=\columnwidth]{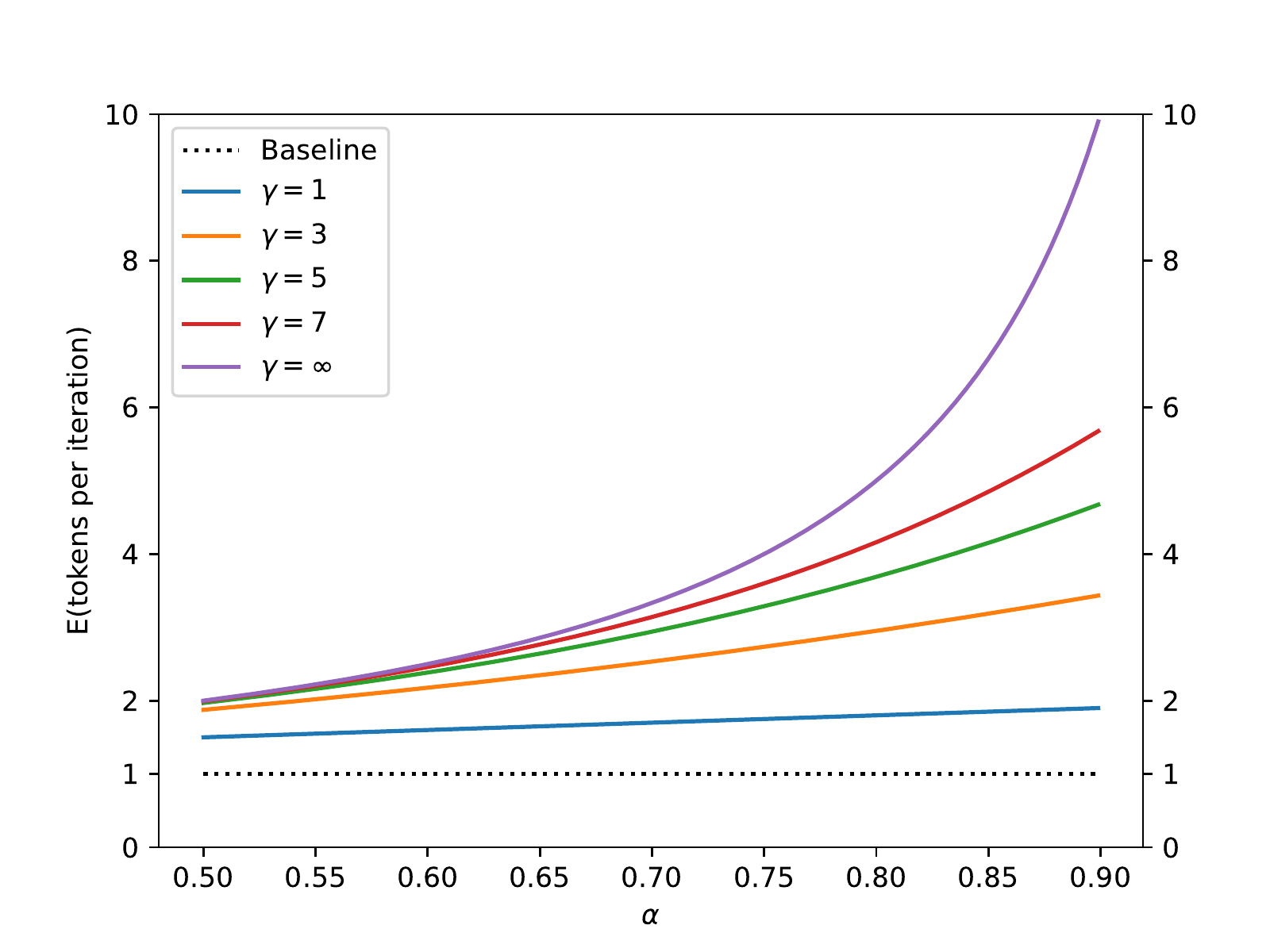}}
\caption{The expected number of tokens generated by \cref{alg:sample_with_approx} as a function of $\alpha$ for various values of $\gamma$.}
\label{fig:alpha_v_tokens}
\end{center}
\vskip -0.2in
\end{figure}

\subsection{Calculating $\alpha$}

We'll now derive a simple formula for calculating $\alpha$ given a prefix and the two models $M_p$ and $M_q$. We start by defining a natural divergence $D_{LK}$:

\begin{definition}
\label{def:dlk}
$D_{LK}(p, q) = \sum_{x}|p(x) - M(x)| = \sum_{x}|q(x) - M(x)|$ where $M(x) = \frac{p(x) + q(x)}{2}$.
\end{definition}

\begin{lemma}
\label{lemma:dlk}
$D_{LK}(p, q) = 1 - \sum_x\min(p(x), q(x))$
\end{lemma}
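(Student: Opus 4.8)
The plan is to reduce everything to an elementary pointwise algebraic identity and then sum over the support. First I would observe that $D_{LK}$ is really just (half) the $L^1$ distance between $p$ and $q$. Indeed, since $M(x) = \tfrac{p(x)+q(x)}{2}$, we have $p(x) - M(x) = \tfrac{1}{2}\bigl(p(x) - q(x)\bigr)$ and symmetrically $q(x) - M(x) = \tfrac{1}{2}\bigl(q(x) - p(x)\bigr)$, so $|p(x) - M(x)| = |q(x) - M(x)| = \tfrac{1}{2}|p(x) - q(x)|$. This single computation simultaneously justifies that the two expressions in \cref{def:dlk} agree (so $D_{LK}$ is well defined) and shows that $D_{LK}(p,q) = \tfrac{1}{2}\sum_x |p(x) - q(x)|$.

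The key algebraic step is the pointwise identity, valid for any reals $a,b$,
\[
\min(a,b) = \frac{a + b - |a - b|}{2}, \qquad\text{equivalently}\qquad |a-b| = a + b - 2\min(a,b).
\]
Applying this with $a = p(x)$, $b = q(x)$ turns each term of the $L^1$ sum into $p(x) + q(x) - 2\min(p(x),q(x))$, which is the form that will produce the claimed right-hand side after summation.

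Finally I would sum over all $x$ and use that $p$ and $q$ are probability distributions. Concretely,
\[
2\,D_{LK}(p,q) = \sum_x |p(x) - q(x)| = \sum_x p(x) + \sum_x q(x) - 2\sum_x \min(p(x),q(x)) = 2 - 2\sum_x \min(p(x),q(x)),
\]
using $\sum_x p(x) = \sum_x q(x) = 1$. Dividing by $2$ gives $D_{LK}(p,q) = 1 - \sum_x \min(p(x),q(x))$, as required.

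There is no real obstacle here: the statement is a restatement of the standard equivalence between total variation distance and the $1 - \sum_x \min$ formula. The only points demanding any care are (i) getting the factor of $\tfrac12$ right when passing from $|p - M|$ to $|p - q|$, and (ii) making sure the $\min$ identity is applied to the nonnegative probabilities so that the absolute value unwinds correctly; both are routine.
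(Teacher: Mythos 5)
Your proof is correct and follows essentially the same route as the paper's: rewrite $D_{LK}$ as half the $L^1$ distance, apply the identity $\min(a,b) = \tfrac{a+b-|a-b|}{2}$, and use $\sum_x p(x) = \sum_x q(x) = 1$. Your write-up is merely more explicit (e.g.\ noting well-definedness of the definition, and noting that the $\min$ identity in fact needs no nonnegativity assumption), but the argument is the same.
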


\begin{proof}
$D_{LK}(p, q) = \sum_{x}|p(x) - M(x)| = \sum_x\frac{|p-q|}{2}= 1 - \sum_x\frac{p + q - |p - q|}{2} = 1 - \sum_x\min(p(x), q(x))$
\end{proof}

From \cref{lemma:dlk} we immediately get the following results:

\begin{corollary}
\label{cor:dlk_properties}
$D_{LK}(p, q)\ \text{is a symmetric divergence in}\ [0, 1].\\
D_{LK}(p, q) = 0 \iff p = q. \\
D_{LK}(p, q) = 1 \iff \text{p and q have disjoint support}.$
\end{corollary}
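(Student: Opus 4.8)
The plan is to derive all three claims directly from \cref{lemma:dlk}, which rewrites the divergence as $D_{LK}(p, q) = 1 - \sum_x \min(p(x), q(x))$; this single identity makes each property a short consequence of elementary facts about nonnegative sums. First I would dispose of symmetry: since $\min(p(x), q(x)) = \min(q(x), p(x))$ pointwise, the sum is unchanged under swapping $p$ and $q$, so $D_{LK}(p,q) = D_{LK}(q,p)$. For the range $[0,1]$, I would bound the quantity $S := \sum_x \min(p(x), q(x))$ from both sides: each summand is nonnegative, so $S \ge 0$ and hence $D_{LK} = 1 - S \le 1$; and since $\min(p(x), q(x)) \le p(x)$ for every $x$, we get $S \le \sum_x p(x) = 1$, hence $D_{LK} = 1 - S \ge 0$.

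For the characterization $D_{LK}(p,q) = 0 \iff p = q$, I would argue that $D_{LK} = 0$ is equivalent to $S = 1 = \sum_x p(x)$, i.e. $\sum_x \bigl(p(x) - \min(p(x), q(x))\bigr) = 0$. Every term of this last sum is nonnegative, so the sum vanishes iff each term does, which forces $p(x) = \min(p(x), q(x))$, equivalently $p(x) \le q(x)$, for all $x$; running the same argument with the roles of $p$ and $q$ exchanged gives $q(x) \le p(x)$, so $p = q$. The converse is immediate, since $p = q$ gives $\min(p(x), q(x)) = p(x)$ and thus $S = 1$. (Alternatively, one can read this off the raw definition $D_{LK} = \tfrac{1}{2}\sum_x |p(x) - q(x)|$ used inside the proof of \cref{lemma:dlk}: a sum of absolute values is zero iff each is zero.)

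For $D_{LK}(p,q) = 1 \iff \text{disjoint support}$, I would note that $D_{LK} = 1$ is equivalent to $S = 0$. As $S$ is a sum of the nonnegative quantities $\min(p(x), q(x))$, it vanishes iff $\min(p(x), q(x)) = 0$ for every $x$, which says precisely that no point $x$ has both $p(x) > 0$ and $q(x) > 0$ — i.e. the supports of $p$ and $q$ are disjoint.

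I do not expect any serious obstacle here: the content is entirely routine once \cref{lemma:dlk} is in hand, and the only point requiring a little care is the repeated appeal to the principle that a sum of nonnegative terms equals its extreme value iff the terms are individually extreme (zero for the $S = 0$ case, and termwise-tight for the $S = 1$ case). Making that principle explicit, rather than waving at it, is the one place where a sloppy write-up could hide a gap, so I would state it cleanly before invoking it twice.
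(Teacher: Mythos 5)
Your proof is correct and matches the paper's approach exactly: the paper states this corollary as an immediate consequence of \cref{lemma:dlk}, and your write-up simply makes explicit the routine pointwise arguments (symmetry of $\min$, nonnegativity bounds, and the termwise-vanishing principle) that the paper leaves implicit.
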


\begin{theorem}
\label{thm:dlk_beta}
$\beta = 1 - D_{LK}(p, q)$
\end{theorem}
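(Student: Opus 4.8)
The plan is to compute the acceptance rate $\beta$ directly from the definition of speculative sampling and then invoke \cref{lemma:dlk}. First I would write down the conditional probability that a particular token $x$, once drawn from $q$, is accepted. By the description in \cref{sec:speculative_sampling}, if $q(x) \le p(x)$ the sample is always kept, whereas if $q(x) > p(x)$ it is rejected with probability $1 - \frac{p(x)}{q(x)}$, i.e.\ kept with probability $\frac{p(x)}{q(x)}$. These two cases combine into the single expression $\min\left(1, \frac{p(x)}{q(x)}\right)$ for the conditional acceptance probability, valid on the whole support of $q$.

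Next I would marginalize over the proposal draw $x \sim q(x)$ to obtain $\beta$ itself. Since $\beta$ is, by \cref{def:beta_prefix}, the overall probability of accepting the token proposed by $M_q$, I would write
\begin{equation*}
\beta = \sum_x q(x)\,\min\left(1, \frac{p(x)}{q(x)}\right).
\end{equation*}
Distributing the factor $q(x)$ into the minimum converts each summand into $\min(q(x), p(x))$, so that $\beta = \sum_x \min(p(x), q(x))$.

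Finally I would apply \cref{lemma:dlk}, which states precisely that $\sum_x \min(p(x), q(x)) = 1 - D_{LK}(p, q)$, yielding the claimed identity $\beta = 1 - D_{LK}(p, q)$.

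The argument is essentially a one-line computation, so I do not expect a genuine obstacle. The only point needing care is the case split in the first step: I must confirm that the combined acceptance probability really equals $\min\left(1, \frac{p(x)}{q(x)}\right)$ everywhere (with tokens where $q(x) = 0$ contributing nothing to the sum and thus causing no division issue). The substantive work has already been carried out in \cref{lemma:dlk}, which rewrites the total-variation-style quantity $D_{LK}$ in terms of $\sum_x \min(p(x), q(x))$.
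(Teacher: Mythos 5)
Your proposal is correct and follows essentially the same route as the paper's proof: both express the acceptance probability as $E_{x \sim q(x)}\min\left(1, \frac{p(x)}{q(x)}\right) = \sum_x \min(p(x), q(x))$ and then conclude via \cref{lemma:dlk}. Your write-up is just slightly more explicit about marginalizing over the proposal draw and about the $q(x)=0$ edge case, which the paper leaves implicit.
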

\begin{proof}
$\beta = E_{x \sim q(x)}\left\lbrace
\begin{tabular}{@{}p{0.5cm}p{1.8cm}} 
$1$ & $q(x) \le p(x)$ \\
$\frac{p(x)}{q(x)}$ & $q(x) > p(x)$ \\
\end{tabular}\right. =
E_{x \sim q(x)}\min(1, \frac{p(x)}{q(x)}) =
\sum_x\min(p(x), q(x))$
\end{proof}

Finally we get:

\begin{corollary}
\label{thm:dlk_alpha}
$\alpha = 1 - E(D_{LK}(p, q)) = E(\min(p, q))$
\end{corollary}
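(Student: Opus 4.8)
The plan is to obtain the Corollary as an immediate consequence of chaining together \cref{thm:dlk_beta}, \cref{lemma:dlk}, and linearity of expectation; there is no genuinely hard analytic content here, so the real work is bookkeeping the notation correctly. Throughout, recall that $\alpha$ was \emph{defined} as $\alpha = E(\beta)$, where $\beta = \beta_{x_{<t}}$ is the per-prefix acceptance rate of \cref{def:beta_prefix}, and the expectation $E$ is taken over the distribution of prefixes $x_{<t}$. The key point to keep straight is that $\beta$, $D_{LK}(p,q)$, and $\sum_x \min(p(x),q(x))$ are all quantities that depend on the prefix, so every identity from the previous results should be read as holding pointwise (for each fixed $x_{<t}$) \emph{before} we take the outer expectation.

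First I would establish the left-hand equality. Starting from $\alpha = E(\beta)$, I apply \cref{thm:dlk_beta} pointwise to substitute $\beta = 1 - D_{LK}(p,q)$, and then invoke linearity of expectation to pull the constant out:
\[
\alpha = E(\beta) = E\big(1 - D_{LK}(p,q)\big) = 1 - E\big(D_{LK}(p,q)\big).
\]
This is exactly the first claimed equality and requires nothing beyond \cref{thm:dlk_beta} plus linearity.

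Next I would establish the right-hand equality. Here I use \cref{lemma:dlk}, which gives $D_{LK}(p,q) = 1 - \sum_x \min(p(x),q(x))$ pointwise, so that $1 - D_{LK}(p,q) = \sum_x \min(p(x),q(x))$. Substituting this into $\alpha = E(1 - D_{LK}(p,q))$ yields $\alpha = E\big(\sum_x \min(p(x),q(x))\big)$, which is precisely what the shorthand $E(\min(p,q))$ denotes. Combining the two displays closes the chain $\alpha = 1 - E(D_{LK}(p,q)) = E(\min(p,q))$.

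The only place warranting care — the ``main obstacle'', such as it is — is purely notational rather than mathematical: one must make explicit that $E(\min(p,q))$ abbreviates $E_{x_{<t}}\big(\sum_x \min(p(x),q(x))\big)$, i.e. an \emph{expectation over prefixes of a sum over the vocabulary}, and not some single scalar minimum. I would flag this interpretation explicitly in one sentence so that the reader does not confuse the inner summation (deterministic, given the prefix) with the outer prefix-expectation. Once that convention is stated, the result follows immediately from the two preceding results with no further computation.
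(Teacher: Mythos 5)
Your proof is correct and matches the paper's intent exactly: the paper states this corollary without proof (as an immediate consequence of \cref{thm:dlk_beta} and \cref{lemma:dlk} together with the definition $\alpha = E(\beta)$), and your chain of substitutions plus linearity of expectation is precisely that intended derivation. Your explicit remark distinguishing the inner vocabulary sum from the outer prefix-expectation is a helpful clarification of the paper's notational shorthand, not a deviation from its approach.
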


See \cref{tbl:empirical alphas} for empirically observed $\alpha$ values in our experiments.

\subsection{Walltime Improvement}
\label{sec:latency_imp}

We've shown that with the i.i.d. assumption our algorithm reduces the number of calls to the target model by a factor of $\frac{1 - \alpha^{\gamma + 1}}{1 - \alpha}$.
Note that speculative execution in general, and our algorithm in particular, assume that we have enough compute resources to support the increased concurrency (\cref{sec:arithmetic_ops}). For the walltime anaylsis, we'll assume that we can run $\gamma + 1$ concurrent evaluations of $M_p$ in parallel without increasing the walltime.
To get the total walltime improvement, we now consider the cost of running the approximation model $M_q$.

\begin{definition}
\label{def:c}
Let $c$, the \emph{cost coefficient}, be the ratio between the time for a single run of $M_q$ and the time for a single run of $M_p$.
\end{definition}

Note that unlike $\alpha$ which is an intrinsic property of the models and the task, the value of $c$ depends on the hardware configuration and software implementation details. In our experiments where $M_q$ is typically a couple of orders of magnitude smaller than $M_p$, $c$ was always less than $0.05$ and often negligibly close to 0.

\begin{theorem}
\label{thm:total_walltime}
The expected improvement factor in total walltime by \cref{alg:sample_with_approx} is $\frac{1 - \alpha^{\gamma + 1}}{(1-\alpha)({\gamma}c + 1)}$.

\end{theorem}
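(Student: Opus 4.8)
The plan is to compute, under the same i.i.d. assumption used in \cref{sec:num_generated_tokens}, the expected walltime cost of a single iteration of \cref{alg:sample_with_approx}, divide the expected number of tokens it produces by that cost, and compare the result against the standard autoregressive baseline. The token-count side of this is already in hand: \cref{eq:expected_num_tokens} gives the expected yield per iteration as $\frac{1-\alpha^{\gamma+1}}{1-\alpha}$, so the only new work is accounting for walltime.

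First I would fix units by normalizing the walltime of a single run of $M_p$ to $1$; by \cref{def:c} a single run of $M_q$ then costs $c$. Next I would tally the cost of one iteration of \cref{alg:sample_with_approx}. The loop over $M_q$ is serial and performs exactly $\gamma$ runs, contributing $\gamma c$; the $\gamma + 1$ evaluations of $M_p$ are performed concurrently, and by the standing assumption that $\gamma + 1$ parallel runs of $M_p$ do not increase walltime over a single run, they contribute exactly $1$. The key structural observation is that this per-iteration cost $\gamma c + 1$ is deterministic — it does not depend on how many guesses are accepted — whereas the number of tokens emitted is the capped geometric random variable of \cref{sec:num_generated_tokens}.

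I would then compare the two schemes on equal footing. The standard method spends walltime $1$ per token, so producing the expected yield $\frac{1-\alpha^{\gamma+1}}{1-\alpha}$ of a single iteration would cost it exactly $\frac{1-\alpha^{\gamma+1}}{1-\alpha}$ in walltime, while speculative decoding produces that same expected yield at a walltime of only $\gamma c + 1$. The improvement factor is the ratio of these, namely $\frac{1}{\gamma c + 1}\cdot\frac{1-\alpha^{\gamma+1}}{1-\alpha} = \frac{1-\alpha^{\gamma+1}}{(1-\alpha)(\gamma c + 1)}$, as claimed.

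The step I expect to require the most care is passing from ``expected tokens per iteration'' to ``expected total walltime,'' since the number of iterations needed to generate a fixed number of tokens is itself random. Here the i.i.d. assumption is what saves us: the iterations form an i.i.d. sequence with a constant cost $\gamma c + 1$ and i.i.d. token rewards, so a renewal-reward argument (equivalently, Wald's identity) shows that the long-run expected walltime per token equals the per-iteration cost divided by the per-iteration expected yield — which is precisely the quantity I divided above. I would flag explicitly that the parallelism assumption and the i.i.d. assumption are the two idealizations doing all the work, and that $c$ is the sole hardware-dependent quantity entering the bound.
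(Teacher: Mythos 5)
Your proof is correct and follows essentially the same route as the paper's: a deterministic per-iteration cost of $\gamma c + 1$ (in units of one $M_p$ run), divided into the expected yield $\frac{1-\alpha^{\gamma+1}}{1-\alpha}$ from \cref{eq:expected_num_tokens}, compared against the baseline cost of $1$ per token. Your explicit renewal-reward/Wald justification for passing from per-iteration expectations to long-run walltime per token is a point the paper's proof leaves implicit, but it does not change the argument.
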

\begin{proof}
Denote the cost of running a single step of $M_p$ by $T$. Now, each run of \cref{alg:sample_with_approx} costs $Tc\gamma + T$ (for running the approximation model $M_q$ $\gamma$ times and running $M_p$ once) and according to \cref{eq:expected_num_tokens} produces $\frac{1 - \alpha^{\gamma + 1}}{1 - \alpha}$ tokens on average.
So the overall expected cost for producing a token with \cref{alg:sample_with_approx} is $\frac{(c\gamma + 1)(1-\alpha)}{1 - \alpha^{\gamma + 1}}T$.
Since the cost of producing a single token with the standard decoding algorithm is $T$, we get the desired result.
\end{proof}

Note that \cref{thm:total_walltime} assumes long enough generations (for example, since we run $M_p$ at least once, the improvement factor is capped by the number of generated tokens).

\begin{corollary}
\label{lemma:condition_for_improvement}
If $\alpha > c$, there exists $\gamma$ for which we'll get an improvement, and the improvement factor will be at least $\frac{1 + \alpha}{1 + c}$.
\end{corollary}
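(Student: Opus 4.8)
The plan is to produce an explicit witness $\gamma$ rather than to optimize over $\gamma$. By \cref{thm:total_walltime}, for any fixed $\gamma \in \mathbb{Z}^+$ the expected walltime improvement factor is
\[
f(\gamma) = \frac{1 - \alpha^{\gamma+1}}{(1-\alpha)(\gamma c + 1)}.
\]
Since the corollary only asserts the \emph{existence} of some $\gamma$ that yields an improvement, it suffices to evaluate $f$ at one convenient choice and check that this value already exceeds both $1$ and the claimed bound.

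My guess is that $\gamma = 1$ is the right witness, because the proposed lower bound $\frac{1+\alpha}{1+c}$ looks exactly like what a single algebraic cancellation would produce. So the first concrete step is to substitute $\gamma = 1$, giving $f(1) = \frac{1-\alpha^2}{(1-\alpha)(c+1)}$. Then I would factor the numerator as $1 - \alpha^2 = (1-\alpha)(1+\alpha)$ and cancel the common $(1-\alpha)$ factor, which is legitimate since $\alpha < 1$, obtaining precisely $f(1) = \frac{1+\alpha}{1+c}$. This shows that the target quantity is not merely a bound but is literally attained at $\gamma = 1$.

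Finally I would read off the improvement condition: $\frac{1+\alpha}{1+c} > 1$ is equivalent to $1 + \alpha > 1 + c$, i.e.\ to $\alpha > c$, which is exactly the hypothesis; hence at $\gamma = 1$ we already have $f(1) > 1$, establishing that an improving $\gamma$ exists. For the ``at least'' clause, the best achievable improvement factor satisfies $\max_\gamma f(\gamma) \ge f(1) = \frac{1+\alpha}{1+c}$, so optimizing over $\gamma$ can only help. I do not expect a genuine obstacle here; the only thing to notice is to interpret the stated bound as the $\gamma = 1$ value rather than to attempt a general maximization of $f(\gamma)$ over $\gamma$, which would be needlessly painful and is not required.
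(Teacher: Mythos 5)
Your proposal is correct and takes essentially the same route as the paper: the paper's proof also evaluates \cref{thm:total_walltime} at $\gamma = 1$ and simplifies $\frac{1-\alpha^2}{(1-\alpha)(c+1)} = \frac{1+\alpha}{1+c}$, which exceeds $1$ exactly when $\alpha > c$. Your write-up is if anything slightly cleaner, since you treat $\gamma=1$ as an explicit witness rather than invoking the paper's (unproven and unnecessary) monotonicity remark that an improvement at $\gamma$ implies one at all smaller $\gamma^*$.
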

\begin{proof}
If we get an improvement for $\gamma$, we’d also get an improvement for any $0 < \gamma^* < \gamma$, so for our method to yield an improvement, we can evaluate \cref{thm:total_walltime} for $\gamma=1$, yielding $\frac{1 - \alpha^2}{(1-\alpha)(c + 1)} = \frac{1 + \alpha}{1 + c}$.
\end{proof}

\subsection{Number of Arithmetic Operations}
\label{sec:arithmetic_ops}

\cref{alg:sample_with_approx} does $\gamma + 1$ runs of $M_p$ in parallel, so the number of \emph{concurrent} arithmetic operations grows by a factor of $\gamma + 1$.
Now, since \cref{alg:sample_with_approx} produces at most $\gamma + 1$ tokens per run, the \emph{total} number of arithmetic operations might be higher than that of the standard decoding algorithm.
When we accept the sample from $M_q$ the increased concurrency is ``free'' and the total number of operations isn't increased\footnote{Neglecting the cost of $M_q$.}. When we reject a guess though, computation is wasted.
Let's now analyze the effect of our method on the total number of arithmetic operations.

\begin{definition}
Let $\hat{c}$ be the ratio of arithmetic operations per token of the approximation model $M_q$ to that of the target model $M_p$.
\end{definition}

\begin{theorem}
\label{thm:num_ops}
The expected factor of increase in the number of total operations of \cref{alg:sample_with_approx} is $\frac{(1-\alpha)({\gamma}\hat{c} + \gamma + 1)}{1 - \alpha^{\gamma + 1}}$.
\end{theorem}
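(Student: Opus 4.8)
The plan is to mirror the walltime argument of \cref{thm:total_walltime}, replacing walltime with the total count of arithmetic operations and normalizing so that one token's worth of operations of the target model $M_p$ costs one unit. With this normalization, the definition of $\hat{c}$ says that one token's worth of operations of the approximation model $M_q$ costs exactly $\hat{c}$ units.

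First I would tally the operations incurred in a single run of \cref{alg:sample_with_approx}. Drawing the $\gamma$ guesses $x_1, \ldots, x_\gamma$ autoregressively from $M_q$ costs $\gamma\hat{c}$ units. Evaluating $M_p$ on the $\gamma + 1$ prefixes to obtain $p_1(x), \ldots, p_{\gamma+1}(x)$ touches $\gamma + 1$ token positions and hence costs $\gamma + 1$ units. The important point here is that we are counting \emph{total} operations rather than the \emph{concurrent} ones of \cref{sec:arithmetic_ops}: even though the $\gamma + 1$ evaluations of $M_p$ run in parallel and therefore do not inflate walltime, each still consumes operations that must be summed. So the cost per run is the deterministic quantity $\gamma\hat{c} + \gamma + 1$.

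Next I would divide this fixed per-run cost by the expected number of tokens that a run produces. Under the i.i.d. assumption, \cref{eq:expected_num_tokens} gives $\frac{1 - \alpha^{\gamma+1}}{1-\alpha}$ tokens per run in expectation, so the expected operations per generated token equal $(\gamma\hat{c} + \gamma + 1)\frac{1-\alpha}{1-\alpha^{\gamma+1}}$. Since standard autoregressive decoding spends exactly one unit per token, taking the ratio against this baseline yields the claimed factor $\frac{(1-\alpha)(\gamma\hat{c} + \gamma + 1)}{1 - \alpha^{\gamma+1}}$.

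The step that most needs care --- rather than a deep obstacle --- is the bookkeeping distinction between total and concurrent operations, together with the observation that the numerator $\gamma\hat{c} + \gamma + 1$ is incurred in full irrespective of how many guesses end up accepted, since rejected guesses simply waste work already performed. Consequently all of the randomness sits in the denominator, the number of tokens produced, and the ``expected factor'' collapses to dividing a fixed numerator by the expected token count supplied by \cref{eq:expected_num_tokens}.
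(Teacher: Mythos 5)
Your proposal is correct and follows essentially the same argument as the paper's proof: tally the deterministic per-run cost $\gamma\hat{c} + \gamma + 1$ (in units of one $M_p$ token's operations, which is just the paper's division by $\hat{T}$ done upfront), then divide by the expected token count from \cref{eq:expected_num_tokens}. Your added remarks on the total-versus-concurrent distinction and on the cost being incurred regardless of acceptances are accurate glosses on the same computation, not a different route.
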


\begin{proof}
Denote by $\hat{T}$ the number of arithmetic operations done by a standard decoding baseline per token, i.e. the number of operations of a single run of $M_p$. Then a single iteration of \cref{alg:sample_with_approx} costs $\hat{T}\hat{c}\gamma + \hat{T}(\gamma + 1)$ operations (for $\gamma$ runs of $M_q$ and $\gamma + 1$ parallel runs of $M_p$). Dividing by the expected number of tokens produced by \cref{alg:sample_with_approx}, i.e. \cref{eq:expected_num_tokens}, and by $\hat{T}$, we get the desired result.
\end{proof}

If $\alpha$ is low, the increase in the number of arithmetic operations is high, and vice-versa.
Note that for Transformer decoders, the total number of arithmetic operations by \cref{alg:sample_with_approx} (not counting runs of $M_q$) \emph{can be bounded from above by a single run of the same-size Transformer encoder}.

Unlike the total number of arithmetic operations, the total number of memory accesses can go down with our method. Specifically, the target model's weights and KV cache can be read once per execution of \cref{alg:sample_with_approx}, so the number of memory accesses for reading them shrinks by a factor of $\frac{1 - \alpha^{\gamma + 1}}{1 - \alpha}$, according to \cref{eq:expected_num_tokens}.

\begin{figure}[ht]
\vskip 0.2in
\begin{center}
\centerline{\includegraphics[width=\columnwidth]{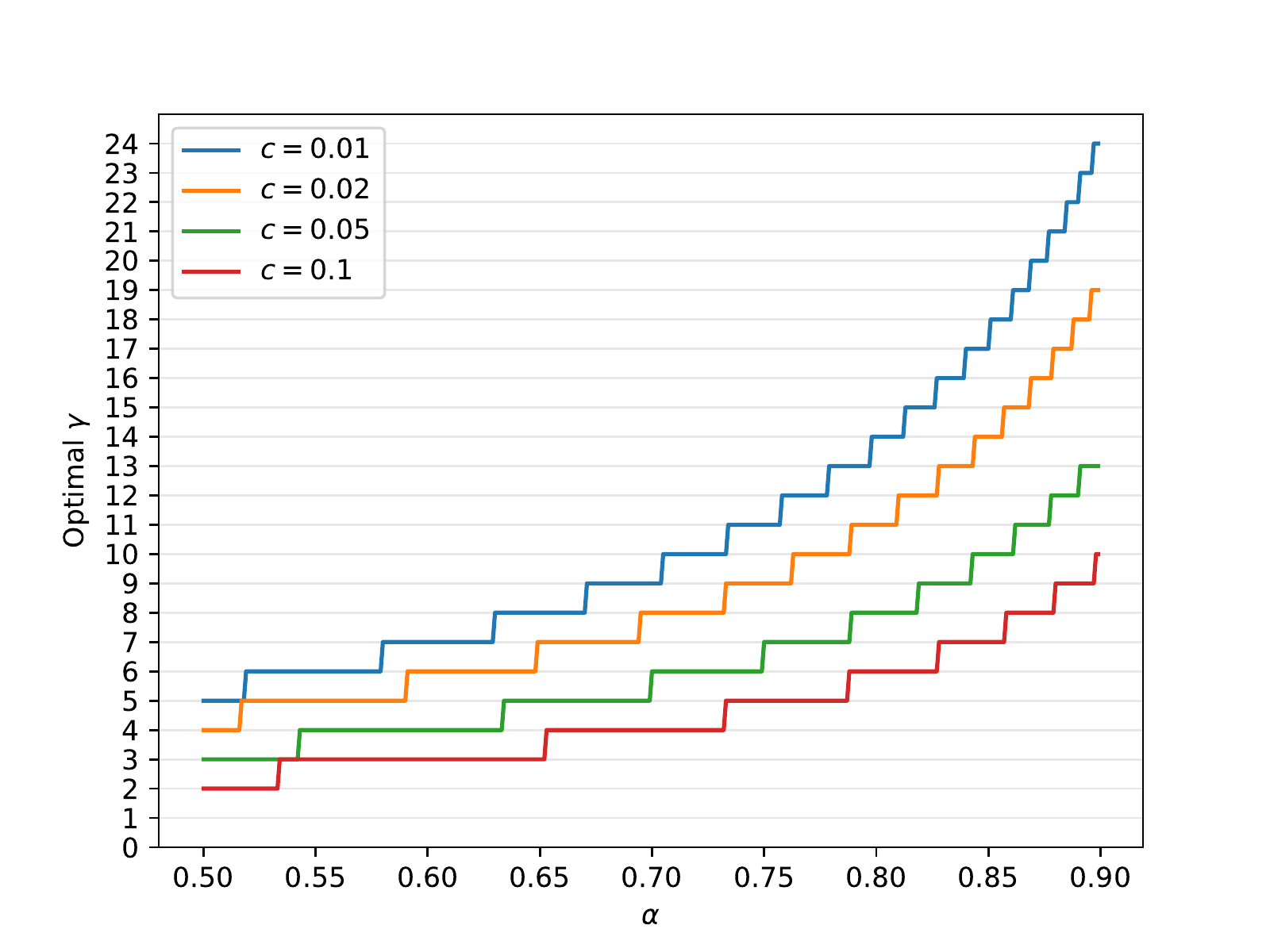}}
\caption{The optimal $\gamma$ as a function of $\alpha$ for various values of $c$.}
\label{fig:alpha_v_optimal_gamma}
\end{center}
\vskip -0.2in
\end{figure}

\subsection{Choosing \texorpdfstring{$\gamma$}{gamma}}
\label{sec:optimal_gamma}
Given $c$ and $\alpha$ and assuming enough compute resources (see \cref{sec:arithmetic_ops}), the optimal $\gamma$ is the one maximizing the walltime improvement equation (\cref{thm:total_walltime}): $\frac{1 - \alpha^{\gamma + 1}}{(1-\alpha)({\gamma}c + 1)}$. Since $\gamma$ is an integer, it can be easily found numerically, see \cref{fig:alpha_v_optimal_gamma}.
% Depending on the amount of compute resources at our disposition, we might want to choose $\gamma$ values that are lower than the optimal value for unbounded computational power.

\cref{table:non optimal gamma} and \cref{fig:arithm_ops} illustrate the trade-off between inference speed and the total number of arithmetic operations for various values of $\alpha$ and $\gamma$, assuming $c = \hat{c} = 0$.
\cref{fig:xprof} shows a simplified trace diagram.

\begin{table}[ht]
\caption{The total number of arithmetic operations and the inference speed vs the baseline, for various values of $\gamma$ and $\alpha$, assuming $c = \hat{c} = 0$.}
\label{table:non optimal gamma}
\vskip 0.15in
\begin{center}
\begin{small}
\begin{sc}
\begin{tabular}{lcccr}
\toprule
$\alpha$ & $\gamma$ & Operations & Speed \\
\midrule
0.6 & 2 & 1.53X & 1.96X \\
0.7 & 3 & 1.58X & 2.53X \\
0.8 & 2 & 1.23X & 2.44X \\
0.8 & 5 & 1.63X & 3.69X \\
0.9 & 2 & 1.11X & 2.71X \\
0.9 & 10 & 1.60X & 6.86X \\
\bottomrule
\end{tabular}
\end{sc}
\end{small}
\end{center}
\vskip -0.1in
\end{table}

\begin{figure}[ht]
\vskip 0.2in
\begin{center}
\centerline{\includegraphics[width=\columnwidth]{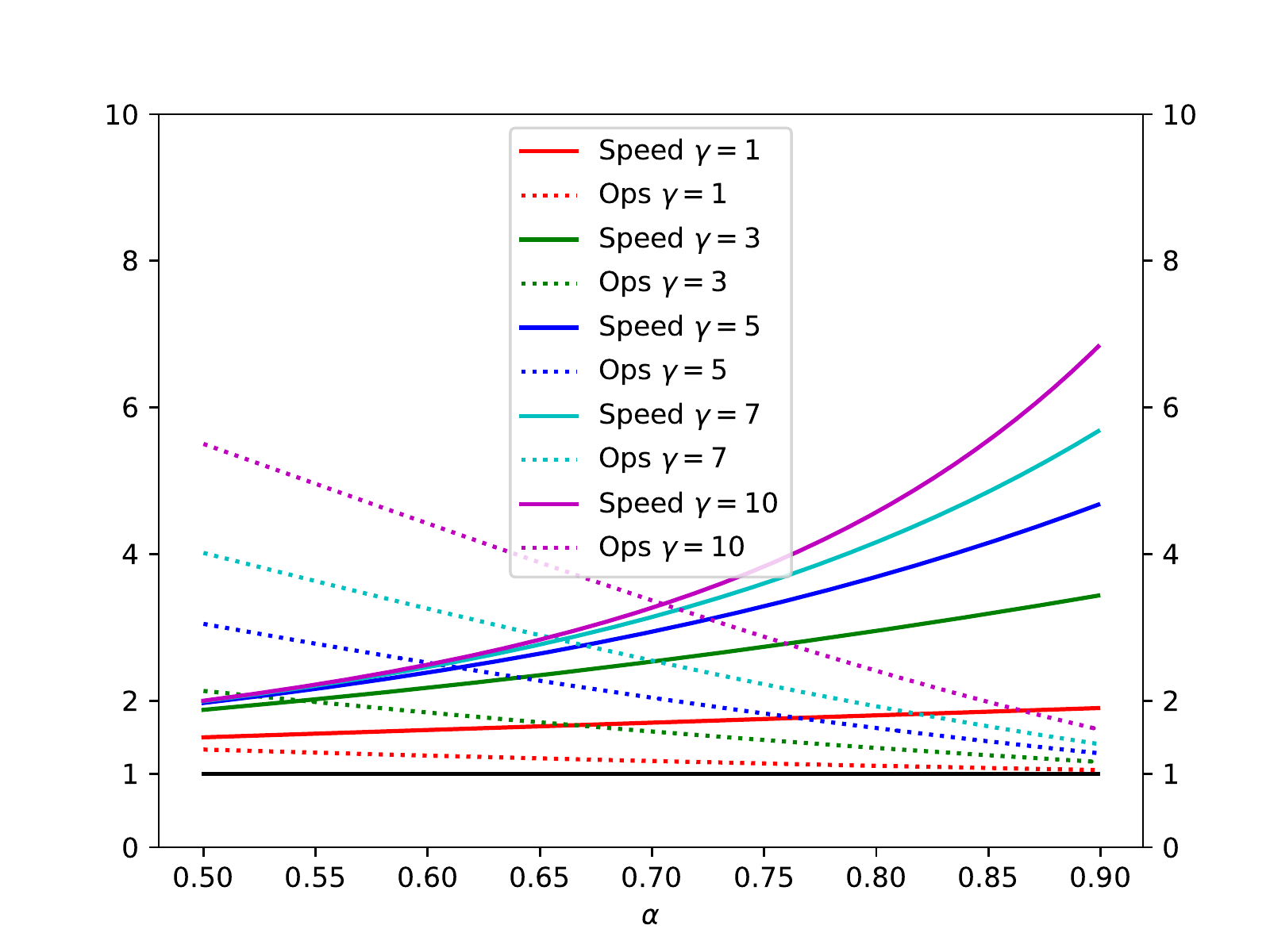}}
\caption{The speedup factor and the increase in number of arithmetic operations as a function of $\alpha$ for various values of $\gamma$.}
\label{fig:arithm_ops}
\end{center}
\vskip -0.2in
\end{figure}

\begin{figure*}
\includegraphics[width=1\linewidth]{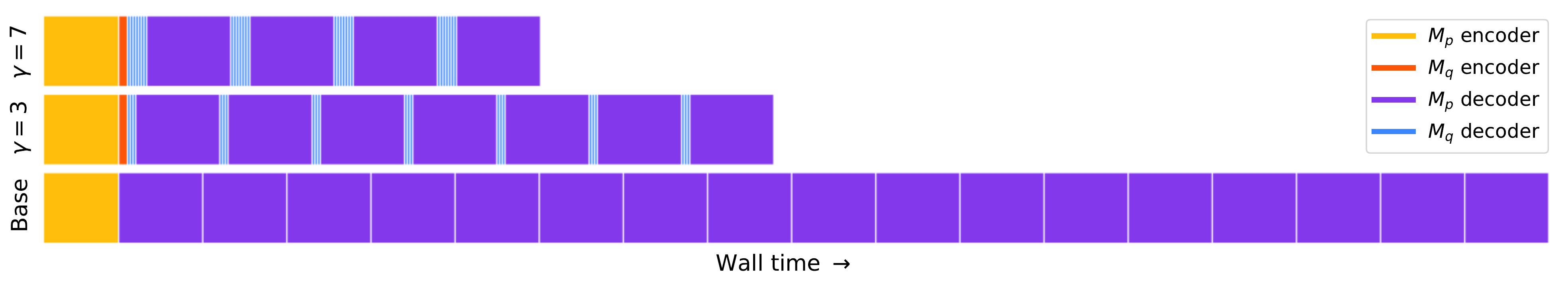}
\caption[xprof]{A simplified trace diagram for a full encoder-decoder Transformer stack.
The top row shows speculative decoding with $\gamma=7$ so each of the calls to $M_p$ (the purple blocks) is preceded by 7 calls to $M_q$ (the blue blocks). The yellow block on the left is the call to the encoder for $M_p$ and the orange block is the call to the encoder for $M_q$. Likewise the middle row shows speculative decoding with $\gamma=3$, and the bottom row shows standard decoding.}
\centering
\label{fig:xprof}
\end{figure*}

Instead of picking a single value for $\gamma$ based on $\alpha$, since the $\beta$s aren’t constant,
we could get further improvement by predicting the value of $\beta$ and accordingly varying the value of $\gamma$ during the run of \cref{alg:sample_with_approx}. To get an upper bound on the additional improvement factor, assume we had an oracle for $\gamma$. We would then have $E(\#\ generated\ tokens) = \frac{1}{1-\alpha}$.
For typical values of $c$ and $\alpha$, and assuming unbounded compute resources, the enhanced walltime improvement factor can be up to $\sim$60\% higher than the improvement factor with a fixed $\gamma$. We leave exploring this for future work\footnote{The above bound assumes that we still run $M_p$ to verify the oracle's predictions. If we skip those verifications the bound doesn't hold and we would get a substantial additional improvement.}.

\subsection{Approximation Models}
\label{sec:approximation_models}
Speculative sampling, and therefore speculative decoding, guarantee an identical output distribution for any choice of approximation model $M_q$ without restriction (see \cref{appendix:correctness}).
In our experiments, we mostly tested existing off-the-shelf smaller Transformers as the approximation models.
Further, we only tested approximation models of the same architecture as the target models $M_p$ and using the same probability standardization.
In this setup, choosing $M_q$ to be around two orders of magnitude smaller than $M_p$ usually performed best, balancing $\alpha$ and $c$ (\cref{thm:total_walltime}).

Another type of approximation models, \emph{negligible-cost models}, are those for which $c \approx 0$, i.e. approximation models with a negligible cost relative to the target model. In this case, we get an expected walltime improvement of $\frac{1 - \alpha^{\gamma + 1}}{1 - \alpha}$, which is bounded from above by $\frac{1}{1 - \alpha}$ (we approach equality if $\gamma$ is large).
One interesting type of negligible-cost approximation models are n-gram models, where the evaluation amounts to a table lookup. Interestingly, in empirical tests (\cref{sec:empirical_alphas}) we get non zero $\alpha$s even for these trivial n-gram models. For example, for the English-German translation task, with $M_p$ being T5-XXL 11B and $M_q$ being a trivial bigram model, we get $\alpha \approx 0.2$ which leads to an inference speed improvement factor of $1.25$X with $\gamma=3$.

Other simple heuristics can be used as negligible-cost approximation models.
For example, in cases where long sequences are likely to repeat, such as for summarization tasks or chat-like interfaces \footnote{E.g. where a user and a language model iterate on content, like text or code (``can you rewrite this story but change the ending'', ``can you make this function also do X'').}, an approximation model that simply copies tokens from the context in case we find a matching prefix, might yield high values of $\alpha$.
These parameter-less approximation models, have the additional advantage of being even simpler to deploy from a production standpoint.

Another type of approximation models that can be used by speculative decoding are non-autoregressive models, like those from \citep{Blockwise_Parallel_Decoding}. Then, instead of the autogreressive loop in \cref{alg:sample_with_approx} we'd just call the non-autoregressive model once.

A final example, interesting mostly from a theoretical perspective, is an approximation model which chooses tokens at random, which guarantees some improvement (although very small) for all models $M_p$.

\vspace{1.04cm}

\section{Experiments}
\label{sec:experiments}

\subsection{Empirical Walltime Improvement}
\label{sec:empirical-t5-time}

We implement our algorithm and compare it to the implementation in the T5X codebase for accelerating T5-XXL.

\paragraph{Setup} We test a standard encoder-decoder T5 version 1.1 model \citep{t5} on two tasks from the T5 paper: (1) English to German translation fine tuned on WMT EnDe, and (2) Text summarization fine tuned on CCN/DM. For both tasks, we use T5-XXL (11B) for $M_p$. For the approximation model $M_q$ we test several existing configurations, namely T5-large (800M), T5-base (250M), and T5-small (77M) \citep{t5}. We use existing checkpoints for all models. We measure walltime improvements with a batch size of 1 on a single TPU-v4 for both argmax sampling (temp=0) and standard sampling (temp=1).

\paragraph{Results}

\cref{tbl:t5 model results} shows the empirical results from our method. We see that T5-small (77M), with a good balance of $c$ and $\alpha$, provides the highest speedup out of the tested approximation models. As expected we see that $\alpha$ increases with the size of the approximation model. Interestingly, $\alpha$ and walltime improvement are higher for argmax sampling (temp=0). We observe speedups of 2.6X (temp=1) and 3.4X (temp=0) on the translation task and slightly lower speedups of 2.3X (temp=1) and 3.1X (temp=0) for the summarization task. These empirical results match well with the theoretical predictions, with some variance due to implementation details (see \cref{sec:theory_vs_emp}).

\begin{table}[ht]
  \caption{Empirical results for speeding up inference from a T5-XXL 11B model.}
  \label{tbl:t5 model results}
\vskip 0.15in
\begin{center}
\begin{small}
\begin{sc}
\begin{tabular}{llcccrr}
\toprule
Task & $M_q$ & Temp & $\gamma$ & $\alpha$ & Speed\\
\midrule
EnDe & T5-small $\bigstar$     & 0 & 7 &  0.75  &  \textbf{3.4X}\\
EnDe & T5-base      & 0 & 7 &  0.8  &  2.8X \\
EnDe & T5-large      & 0 & 7 &  0.82  &  1.7X\\
EnDe & T5-small $\bigstar$     & 1 & 7 &  0.62  &  \textbf{2.6X}\\
EnDe & T5-base      & 1 & 5 &  0.68  &  2.4X \\
EnDe & T5-large      & 1 & 3 &  0.71  &  1.4X \\
\midrule
CNNDM & T5-small $\bigstar$      & 0 & 5 &  0.65  &  \textbf{3.1X} \\
CNNDM & T5-base      & 0 & 5 &  0.73  &  3.0X \\
CNNDM & T5-large      & 0 & 3 &  0.74  &  2.2X\\
CNNDM & T5-small $\bigstar$      & 1 & 5 &  0.53  &  \textbf{2.3X}\\
CNNDM & T5-base      & 1 & 3 &  0.55  &  2.2X \\
CNNDM & T5-large      & 1 & 3 &  0.56  &  1.7X \\
\bottomrule
\end{tabular}
\end{sc}
\end{small}
\end{center}
\vskip -0.1in
\end{table}

\subsection{Empirical $\alpha$ Values}
\label{sec:empirical_alphas}

While we only implemented our method for T5, we measured $\alpha$ values for various tasks, sampling methods, target models $M_p$, and approximation models $M_q$. Specifically, we evaluated the expectation from \cref{thm:dlk_alpha} on 10K tokens generated by $M_p$, for each of the settings below.

\paragraph{GPT-like (97M params)}We test a decoder-only Transformer model on unconditional language generation, trained on lm1b \citep{lm1b}. The model here is a GPT-like Transformer decoder with Gelu activations \citep{gelu}. For $M_q$ we experimented with a Transformer decoder model with 6M parameters: dim 256, dim feed-forward 1024, 2 layers, 4 attention heads, as well as simple unigram and bigram models. $M_p$ has 97M parameters: dim 768, dim feed-forward 3072, 12 layers, 12 attention heads. We used Bert tokenization \citep{bert} with 8k tokens for all models.

\paragraph{LaMDA (137B params)} We tested a decoder only LaMDA model on a dialog task \citep{lamda}.
We used existing checkpoints from LaMDA 137B as $M_p$ and LaMDA 8B, LaMDA 2B, and LaMDA 100M for $M_q$.

See \cref{sec:empirical-t5-time} for the setup of the T5-XXL (11B params) model.

\cref{tbl:empirical alphas} summarizes the $\alpha$ values for the tested cases. We observe that approximation models that are a couple of orders of magnitude smaller than the target model tend to produce $\alpha$ values between 0.5 and 0.9. Interestingly, we also note that for all models, the sharper the adjusted distribution, the higher the $\alpha$ values. Finally, we note that even trivial unigram and bigram approximations yield non negligible $\alpha$ values. For example, for the case of English to German translation, the bigram model has an $\alpha$ value of 0.2, and since $c=0$ in this case, yields a 1.25X speed improvement, which is surprisingly high for this trivial approximation model (but is still lower than the speedup we get from using T5-small as the approximation model).

\stepcounter{footnote}

\begin{table}[ht]
  \caption{Empirical $\alpha$ values for various target models $M_p$, approximation models $M_q$, and sampling settings. T=0 and T=1 denote argmax and standard sampling respectively\footnotemark[\value{footnote}].}
  \label{tbl:empirical alphas}
\vskip 0.15in
\begin{center}
\begin{small}
\begin{sc}
\begin{tabular}{lllccr}
\toprule
    $M_p$ & $M_q$ & Smpl & $\alpha$ \\
    
    \midrule
    GPT-like (97M) & Unigram      & t=0    & 0.03\\
    GPT-like (97M) & Bigram       & t=0    &  0.05\\
    GPT-like (97M) & GPT-like (6M)  & t=0    &  0.88\\
    GPT-like (97M) & Unigram      & t=1  &  0.03\\
    GPT-like (97M) & Bigram       & t=1  &  0.05\\
    GPT-like (97M) & GPT-like (6M)  & t=1  &  0.89\\
    \midrule
    \midrule
    
    T5-XXL (EnDe) & Unigram     & t=0 & 0.08 \\
    T5-XXL (EnDe) & Bigram      & t=0 & 0.20 \\
    T5-XXL (EnDe) & T5-small    & t=0 & 0.75\\
    T5-XXL (EnDe) & T5-base     & t=0 & 0.80\\
    T5-XXL (EnDe) & T5-large    & t=0 & 0.82\\
    T5-XXL (EnDe) & Unigram     & t=1 & 0.07 \\
    T5-XXL (EnDe) & Bigram      & t=1 & 0.19 \\
    T5-XXL (EnDe) & T5-small    & t=1 & 0.62\\
    T5-XXL (EnDe) & T5-base     & t=1 & 0.68\\
    T5-XXL (EnDe) & T5-large    & t=1 & 0.71 \\
    \midrule
    \midrule
    
    T5-XXL (CNNDM) & Unigram        & t=0 & 0.13 \\
    T5-XXL (CNNDM) & Bigram         & t=0 & 0.23 \\
    T5-XXL (CNNDM) & T5-small       & t=0 &  0.65\\
    T5-XXL (CNNDM) & T5-base        & t=0 &  0.73\\
    T5-XXL (CNNDM) & T5-large       & t=0 &  0.74\\
    T5-XXL (CNNDM) & Unigram        & t=1 & 0.08 \\
    T5-XXL (CNNDM) & Bigram         & t=1 & 0.16 \\
    T5-XXL (CNNDM) & T5-small      & t=1 & 0.53 \\
    T5-XXL (CNNDM) & T5-base       & t=1  & 0.55 \\
    T5-XXL (CNNDM) & T5-large      & t=1 & 0.56 \\
    \midrule
    \midrule
    
    LaMDA (137B)    & LaMDA (100M) & t=0 & 0.61\\
    LaMDA (137B)    & LaMDA (2B)   & t=0 & 0.71\\
    LaMDA (137B)    & LaMDA (8B)   & t=0 & 0.75\\
    LaMDA (137B)    & LaMDA (100M) & t=1 & 0.57\\
    LaMDA (137B)    & LaMDA (2B)   & t=1 & 0.71\\
    LaMDA (137B)    & LaMDA (8B)   & t=1 & 0.74\\
\bottomrule
\end{tabular}
\end{sc}
\end{small}
\end{center}
\vskip -0.1in
\end{table}

\section{Related work}
The efficiency of inference from large models was studied extensively \citep{The_Efficiency_Misnomer}. Many approaches aim to speed up inference from large models in general, and autoregressive models like Transformers in particular. Numerous techniques try to make inference more efficient for all tokens, e.g. distillation \citep{Distilling_the_Knowledge_in_a_Neural_Network}, sparcification \citep{Sparse_is_Enough_in_Scaling_Transformers}, quantization \citep{Quantized_Neural_Networks}, and architecture modification \citep{Primer, multi-query-attn}. Closer to our approach are adaptive computation methods which adapt the amount of computation to problem difficulty \citep{Dynamic_Neural_Networks_Survey}. Examples include attending to a subset of the inputs \citep{Adaptive_Attention_Span_in_Transformers}, and early exits \citep{Confident_Adaptive_Transformers, early_exits, Controlling_Computation_versus_Quality_Sequence_Models, depth_adaptive_transformer, Matching_Model_and_Instance_Complexities}.
Notably, Wisdom of Committees \citep{Matching_Model_and_Instance_Complexities} leverages off-the-shelf smaller models, but is an adaptive computation approach, and so it uses a heuristic to determine when to stop, losing the guarantee of identical outputs to those of the target models.
In general, adaptive computation methods usually learn, either within the model itself or with an auxiliary model, when a computation shortcut can be taken.
Usually, these methods save on both inference time and arithmetic operations, but require a change of architecture, a change of training procedure and training custom models or re-training of existing models. They usually also change the outputs of the model.
We note that while many of the methods above improve the memory to arithmetic-operations ratio, in cases where the ratio remains high, these methods and our speculative decoding method might be effective in tandem.

Two prior methods leverage speculative execution for speeding up decoding from autoregressive models.
Blockwise Parallel Decoding \citep{Blockwise_Parallel_Decoding} decodes several tokens in parallel, similarly to our work.
However, it only supports greedy decoding (temperature=0) and not the general stochastic setting, it requires additional training of a custom model, and focuses on preserving down-stream task quality, instead of guaranteeing identical outputs.
Shallow Aggressive Decoding (SAD) \citep{sad} also decodes several tokens in parallel, similarly to our work.
Unlike our work, SAD only supports copying the input to the output, and not general approximation models, making it only suitable for the cases where the inputs and outputs are very similar like grammatical error correction. In addition, similarly to Blockwise Parallel Decoding, SAD does not support the general stochastic sampling setting.

After we initially published our work, an independent implementation of speculative decoding \citep{chen2023accelerating} showed similar 2X-2.5X improvements on Chinchilla 70B.

\footnotetext{Note that the outputs from the LaMDA model always go through a $Top_{40}$ filter. This has no effect on argmax, but does have some effect on standard sampling.}

\section{Discussion}

We presented \emph{speculative sampling} which enables efficient \emph{stochastic speculative execution} - i.e. speculative execution in the stochastic setting. We analyzed its impact on decoding from autoregressive models like Transformers via \emph{speculative decoding} and have shown that given enough compute resources, we get meaningful 2X-3X speedups in practice vs T5X, a popular optimized implementation.

One limitation of speculative execution in general, and of speculative decoding in particular, is that latency is improved through increased concurrency at the cost of an increased number of arithmetic operations. Thus, our method is not helpful for configurations where additional computation resources are not available. However, in common cases where additional computation resources are available (e.g. when memory bandwidth is the bottleneck) our method provides the speedup with significant benefits: the model architecture doesn't change, retraining isn't required, and most importantly, \emph{the output distribution is guaranteed to stay the same}. Our method is easy to implement, and can be used to speedup inference using out-of-the-box models without developing and evaluating custom schemes.

There are several directions for follow up research, importantly, further investigating the compatibility of speculative decoding with beam search (see \cref{sec:beam}).
Also, while our method yields substantial speedups with existing off-the-shelf approximation models, greater improvements might be obtained via custom approximation models (\cref{sec:approximation_models}), such as those with custom architectures (e.g. custom sizes, non-autoregressive models, or various heuristics) or with custom training procedures (e.g. standard distillation with soft targets from $M_p$, or optimizing $M_q$ for $\alpha$ directly).
It could also be interesting to explore a hierarchical version of the algorithm, where the approximation model is itself accelerated by an even faster model, which could allow for more capable approximation models.
In this work we fixed the approximation model and the number of guesses $\gamma$ throughout inference, but varying them during inference could yield additional improvements (\cref{sec:optimal_gamma}).
In our experiments we always performed the same standardization on the distributions generated by the approximation model as the desired one for the target model (\cref{sec:standardized_sampling}), but further improvements might be obtained by applying different transformations.
We tested speculative decoding only in the text modality, but it might work well in other domains (e.g. images) which would be interesting to experiment with.

Finally, we note that \emph{stochastic speculative execution} and \emph{speculative sampling} can be helpful outside the scope of \emph{speculative decoding} from autoregressive models. For example, given two slow functions, $f(x)$ and $g(y)$ such that $f(x)$ generates a distribution from which $g$'s input is sampled, we could use our method to run $f$ and $g$ in parallel. This setup might arise e.g. in physics simulations, or in reinforcement learning where $f$ is a large model that produces a distribution on actions, and $g$ is the world simulation, which would be interesting to explore.

% Acknowledgements should only appear in the accepted version.
\section*{Acknowledgments}
We would like to extend a special thank you to YaGuang Li for help with everything LaMDA related and for calculating the LaMDA figures in the paper, and to Blake Hechtman for great insights and help with XLA.
We would also like to thank the reviewers for insightful comments, as well as Asaf Aharoni, Reiner Pope, Sasha Goldshtein, Nadav Sherman, Eyal Segalis, Eyal Molad, Dani Valevski, Daniel Wasserman, Valerie Nygaard, Danny Vainstein, the LaMDA and Theta Labs teams at Google, and our families.

% In the unusual situation where you want a paper to appear in the
% references without citing it in the main text, use \nocite
% \nocite{langley00}

\bibliography{references}
\bibliographystyle{icml2023}

%%%%%%%%%%%%%%%%%%%%%%%%%%%%%%%%%%%%%%%%%%%%%%%%%%%%%%%%%%%%%%%%%%%%%%%%%%%%%%%
%%%%%%%%%%%%%%%%%%%%%%%%%%%%%%%%%%%%%%%%%%%%%%%%%%%%%%%%%%%%%%%%%%%%%%%%%%%%%%%
% APPENDIX
%%%%%%%%%%%%%%%%%%%%%%%%%%%%%%%%%%%%%%%%%%%%%%%%%%%%%%%%%%%%%%%%%%%%%%%%%%%%%%%
%%%%%%%%%%%%%%%%%%%%%%%%%%%%%%%%%%%%%%%%%%%%%%%%%%%%%%%%%%%%%%%%%%%%%%%%%%%%%%%
\newpage
\appendix
\onecolumn

\appendix

\section{Appendix}

\subsection{Correctness of Speculative Sampling}
\label{appendix:correctness}

We will now show that for any distributions $p(x)$ and $q(x)$, the tokens sampled via \emph{speculative sampling} from $p(x)$ and $q(x)$ are distributed identically to those sampled from $p(x)$ alone. Let $\beta$ be the acceptance probability (\cref{def:beta_prefix}).

Note that as $p'(x) = norm(max(0, p(x) - q(x))) = \frac{p(x) - min(q(x), p(x))}{\sum_{x'}(p(x') - min(q(x'), p(x')))} = \frac{p(x) - min(q(x), p(x))}{1 - \beta}$, the normalizing constant for the adjusted distribution $p'(x)$ is $1 - \beta$, where the last equation follows immediately from \cref{lemma:dlk} and \cref{thm:dlk_beta}.

Now:

$$
    P(x=x') = P(guess\ accepted, x=x') + P(guess\ rejected, x=x')
$$

Where:

$$
    P(guess\ accepted, x=x') = q(x')\min(1, \frac{p(x')}{q(x')}) = \min(q(x'), p(x'))
$$

And:

$$
    P(guess\ rejected, x=x') = (1 - \beta)p'(x')  = p(x') - \min(q(x'), p(x'))
$$

Overall:

$$
    P(x=x') = \min(p(x'), q(x')) + p(x') - \min(p(x'), q(x')) = p(x').
$$

As desired. $\square$

\subsection{Speculative Sampling vs. Rejection Sampling}
\label{sec:vs_rejection_sampling}
Rejection sampling is the following iterative sampling procedure that looks superficially similar to ours:

\begin{enumerate}
    \item Sample $x \sim q(x)$ and $r \sim U(0,1)$.
    \item If $r < \frac{p(x)}{Mq(x)}$ return $x$.
    \item Go to 1.
\end{enumerate}

Where $M = max_{x}\frac{p(x)}{q(x)}$. We could employ a non-iterative version of rejection sampling instead of speculative sampling - specifically go through steps 1 and 2 above, and otherwise sample from an \emph{unmodified} $p(x)$ directly. That would be much less efficient than our method though. Specifically, the expected accept probability here is $E_{x \sim q(x)}\frac{p(x)}{Mq(x)} = \sum_{x}p(x)\min_{x'}\frac{q(x')}{p(x')} \le \sum_{x}p(x)\min(1, \frac{q(x)}{p(x)}) = \sum_{x}\min(p(x), q(x)) = \alpha$ is (potentially much) lower than the expected accept probability in our method $\alpha$.

%%%%%

\subsection{Theoretical Predictions vs. Empirical Runtimes}
\label{sec:theory_vs_emp}

\cref{tbl:t5 reality vs theory} compares the expected runtime improvements based on \cref{thm:total_walltime} to the empirically measured runtimes from \cref{tbl:t5 model results}.
We estimated the values of $c$ for the various models based on profiler traces.
We can see that the theoretical predictions mostly match the measured runtimes.
The larger differences are due to: (1) optimization differences between our implementation and the baseline, and (2) the simplifying assumption that the $\beta$s are i.i.d. being only an approximation (see \cref{sec:num_generated_tokens}).

\begin{table}[ht]
  \caption{Expected improvement factor (\textsc{Exp}) vs. empirically measured improvement factor (\textsc{Emp}).}
  \label{tbl:t5 reality vs theory}
\vskip 0.15in
\begin{center}
\begin{small}
\begin{sc}
\begin{tabular}{llccccrrr}
\toprule
Task & $M_q$ & Temp & $\gamma$ & $\alpha$ & $c$ & Exp & Emp \\
\midrule
EnDe & T5-small & 0 & 7 &  0.75  & 0.02 & 3.2 & 3.4 \\
EnDe & T5-base & 0 & 7 &  0.8  & 0.04 & 3.3 & 2.8 \\
EnDe & T5-large & 0 & 7 &  0.82  & 0.11 & 2.5 & 1.7 \\
EnDe & T5-small & 1 & 7 &  0.62  & 0.02 & 2.3 & 2.6 \\
EnDe & T5-base & 1 & 5 &  0.68  & 0.04 & 2.4 & 2.4 \\
EnDe & T5-large & 1 & 3 &  0.71  & 0.11 & 2.0 & 1.4 \\
\midrule
CNNDM & T5-small & 0 & 5 &  0.65  & 0.02 & 2.4 & 3.1 \\
CNNDM & T5-base & 0 & 5 &  0.73  & 0.04 & 2.6 & 3.0 \\
CNNDM & T5-large & 0 & 3 &  0.74  & 0.11 & 2.0 & 2.2 \\
CNNDM & T5-small & 1 & 5 &  0.53  & 0.02 & 1.9 & 2.3 \\
CNNDM & T5-base & 1 & 3 &  0.55  & 0.04 & 1.8 & 2.2 \\
CNNDM & T5-large & 1 & 3 &  0.56  & 0.11 & 1.6 & 1.7 \\
\bottomrule
\end{tabular}
\end{sc}
\end{small}
\end{center}
\vskip -0.1in
\end{table}

%%%%%

\subsection{Application to Beam Search}
\label{sec:beam}
Our method can be applied, with some performance penalty, to beam search sampling. Given the original beam width $w$, we can perform beam search with the approximation model $M_q$ and beam width $u \geq w$ for $\gamma$ steps. Then, we can use $M_p$ to check all of the candidates in parallel (costing a compute budget of $(w + u\gamma)$ runs of $M_p$). Finally, for each step, we can accept the guesses of $M_q$ as long as $top_w(M_p) \subseteq top_u(M_q)$ to get identical results to regular beam search with $M_p$ alone (with a more elaborate procedure we could also accept cases where the candidates we got happen to have higher probabilities than those of $M_p$ alone). The analysis of our method in this setting is more involved and we leave it for future work.

\subsection{Lenience}
A strong property of \cref{alg:sample_with_approx} is that the output distribution is guaranteed to remain unchanged. That said, if we're willing to allow some changes, with nice guarantees, we can get further inference speed improvements. To further motivate this, note that when we train two models with identical architectures and sizes on the same dataset, the generated probability distributions will not be identical, so some lenience might make sense. Note that the results in this paper except for this section use the strictest version of \cref{alg:sample_with_approx} and don't allow lenience of any kind.

We could include a lenience parameter $l \in [0, 1]$ and multiply $q(x)$ by $l$ before comparing with $p(x)$ in \cref{alg:sample_with_approx}. This still maintains the nice guarantee that no token can be sampled with probability greater than $\frac{p(x)}{l}$. This means for example, that with $l = \frac{1}{10}$ no token can be sampled with more than $10$X its ground truth probability, so we can guarantee that extremely rare tokens will remain extremely rare (there is no guarantee on the minimum probability, so lenience could hurt the diversity of the samples).

Specifically, with a lenience factor $l$ we have $\alpha = E_{x \sim q(x)}\left\lbrace
\begin{tabular}{@{}p{0.5cm}p{2cm}} 
$1$ & $lq(x) \le p(x)$ \\
$\frac{p(x)}{lq(x)}$ & $lq(x) > p(x)$ \\
\end{tabular}\right. = E_{x \sim q(x)}\frac{p(x)}{max(p(x), lq(x))} = \sum_x\frac{p(x)q(x)}{max(p(x), lq(x))} = \frac{1}{l}\sum_x\min(p(x), lq(x)) = \sum_x\min(\frac{p(x)}{l}, q(x))$.

\cref{tbl:lenience} shows $\alpha$ values for different values of $l$ when $M_p$ is T5-XXL (11B) and $M_q$ is T5-small (77M). With $c = 0.015$, using lenience values of 1, 0.5, 0.3, and 0.1 (meaning that no token can be sampled with probability greater than 1X, 2X, 3X and 10X of the ground truth) we get improvement factors of 2.5X, 3.1X, 3.6X, and 5X respectively.

\begin{table}[ht]
  \caption{$\alpha$ values for various values of $l$ with standard sampling where $M_p$ is T5-XXL (11B) on the EnDe translation task.}
  \label{tbl:lenience}
\vskip 0.15in
\begin{center}
\begin{small}
\begin{sc}
\begin{tabular}{lcccr}
\toprule
$M_q$ & $l=1$ & $l=0.5$ & $l=0.3$ & $l=0.1$ \\
\midrule
    Unigram         & 0.07 & 0.1    & 0.11  & 0.16 \\
    Bigram          & 0.19 & 0.23   & 0.25  & 0.32\\
    T5-small (77M)  & 0.62 & 0.71   & 0.76  & 0.84\\
    T5-base (250M)  & 0.68 & 0.8    & 0.83  & 0.90\\
\bottomrule
\end{tabular}
\end{sc}
\end{small}
\end{center}
\vskip -0.1in
\end{table}

Note that when using temperature = 0 (i.e. argmax sampling), we can no longer use lenience as above. Instead, we could allow some lenience before standardizing the distributions. For example, we could accept the token $x$ sampled from $M_q$ in case $p(x) \leq l \cdot max(p)$. In this case, we measure similar empirical increases in $\alpha$ values to those with temperature = 1. For example, when using lenience values of 1, 0.5, 0.3, and 0.1 for $M_p$ T5-XXL $M_q$ T5-small for English-German translation, we get $\alpha$ values of 0.75, 0.75, 0.8, 0.87. Taking for example $c=0.015$ and $\gamma=8$ we get speed improvement factors of 3.3X, 3.3X, 3.9X, and 4.9X respectively\footnote{In this case, unlike in the standard sampling case shown in \cref{tbl:lenience}, a lenience factor of 0.5 doesn't improve the speed-up.}.

%%%%%%%%%%%%%%%%%%%%%%%%%%%%%%%%%%%%%%%%%%%%%%%%%%%%%%%%%%%%%%%%%%%%%%%%%%%%%%%
%%%%%%%%%%%%%%%%%%%%%%%%%%%%%%%%%%%%%%%%%%%%%%%%%%%%%%%%%%%%%%%%%%%%%%%%%%%%%%%

\end{document}